\title{\LARGE \bf
	The Role of Symmetry in Constructing Geometric \\ Flat Outputs for Free-Flying Robotic Systems
}
\author{Jake Welde, Matthew D. Kvalheim, and Vijay Kumar
\thanks{
	J. Welde and V. Kumar are with the GRASP Laboratory at the University of Pennsylvania, while M. D. Kvalheim is with the Department of Mathematics at the University of Michigan. emails: \texttt{\{jwelde,kumar\}@seas.upenn.edu}, \texttt{kvalheim@umich.edu}.  
	We sincerely thank Prof. Muruhan Rathinam for reviewing several drafts of the manuscript and providing valuable feedback, including the important observation that differential flatness is already a geometric property i.e. all flat outputs are in a sense geometric; we use the term rather to highlight a connection with geometric mechanics that has not previously been understood. We also thank Prof. Richard Murray and Prof. Anthony Bloch for helpful discussions throughout the development of this work. 	We gratefully acknowledge the support of Qualcomm Research, NSF Grant CCR-2112665, and the NSF Graduate Research Fellowship Program.
}
}
\date{\today}
\pgfplotsset{compat = newest}
\declaretheorem{theorem}
\declaretheorem{proposition}
\declaretheoremstyle{normaltext}
\declaretheorem[style=normaltext]{definition}
\declaretheorem[style=normaltext]{remark}
\declaretheorem[style=normaltext]{fact}
\declaretheorem[style=normaltext]{example}
\renewcommand\thmcontinues[1]{\textit{continued}}
\newcommand{\metric}[2]{\langle\langle{\hspace{1pt}}#1{\hspace{1pt},\hspace{1pt}}#2{\hspace{1pt}}\rangle\rangle}
\newcommand{\pairing}[2]{\langle#1{\hspace{1pt};\hspace{1pt}}#2\rangle}
\newcommand{\bracket}[2]{[#1{\hspace{1pt},\hspace{1pt}}#2]}
\newcommand{\curve}[0]{{q}}
\newcommand{\cdac}[0]{{\nabla_{\dot{\curve}} \hspace{1pt}}}
\newcommand{\vf}[1]{\Gamma(T{#1})}
\newcommand{\region}{U}
\newcommand{\unactuated}{X}
\DeclareMathOperator{\spn}{span}
\DeclareMathOperator{\rank}{rank} 
\DeclareMathOperator{\grad}{grad}
\DeclareMathOperator{\rot}{rot}
\newcommand{\flatmodifier}[0]{geometric }
\renewenvironment{quote}{%
	\list{}{%
		\leftmargin0.5cm   %
		\rightmargin\leftmargin
	}
	\item\relax
	\vspace{6pt}
	\begin{center}
		\itshape	
	}
	{	
	\end{center}
	\vspace{6pt}
	\endlist
}
\begin{document}
	
	\begingroup
	\let\cleardoublepage\relax
	\let\clearpage\relax
	\maketitle
	
	\begin{abstract}

Mechanical systems naturally evolve on principal bundles describing their inherent symmetries. The ensuing factorization of the configuration manifold into a \textit{symmetry group} and an internal \textit{shape space} has provided deep insights into the locomotion of many robotic and biological systems. On the other hand, the property of \textit{differential flatness} has enabled efficient, effective planning and control algorithms for various robotic systems. Yet, a practical means of finding a \textit{flat output} for an arbitrary robotic system remains an open question. In this work, we demonstrate surprising new connections between these two domains, for the first time employing symmetry directly to construct a flat output. We provide sufficient conditions for the existence of a \textit{trivialization} of the bundle in which the group variables themselves are a flat output. We call this a \textit{\flatmodifier flat output}, since it is \textit{equivariant} (i.e. it preserves the symmetry) and often \textit{global} or \textit{almost global}, properties not typically enjoyed by other flat outputs.
In such a trivialization, the motion planning problem is easily solved, since a given trajectory for the group variables will fully determine the trajectory for the shape variables that exactly achieves this motion. We provide a partial catalog of robotic systems with \flatmodifier flat outputs and worked examples for the planar rocket, planar aerial manipulator, and quadrotor.

	\end{abstract}

	\endgroup

\section{Introduction}

\textit{Differential flatness} is a strong property of certain control systems that has been exploited for effective planning and control of highly dynamic maneuvers for underactuated robots\cite{Mellinger2011,Tang2018,Thomas2017a}. 
Such approaches are also ideally suited to realtime operation on systems subject to \textit{size, weight, and power} (SWaP) constraints due to their computational efficiency.
For example, a  flatness-based controller for a racing drone executing extremely aggressive trajectories at the edges of the flight envelope showed only a small reduction in tracking performance as compared to a nonlinear model predictive controller; nonetheless, it consumed about 100 times less computational power \cite{Sun2022}. However, finding the requisite \textit{flat output} for a given system (or even determining whether one exists) is a challenging task usually achieved by manual trial and error, since the necessary and sufficient conditions for flatness of underdetermined systems \cite{Levine2011} are broadly speaking too general to be tractably applied to multibody robotic systems, whose equations of motion grow rapidly in complexity with the number of bodies.

Nevertheless, numerous mechanical systems have been found to be differentially flat \cite{Murray1995}, and previous work has employed the Riemannian structure of the equations of motion of certain classes of mechanical systems to obtain tractable conditions for flatness. 
For unconstrained systems with no more than one unactuated degree of freedom, \cite{Rathinam1998} gave a constructive necessary and sufficient condition to obtain a \textit{configuration flat output} (a function of the configuration alone and not the velocities, inputs, or higher derivatives). These results were extended in \cite{Sato2012} to systems with more unactuated degrees of freedom, but a candidate output was assumed to be given. Furthermore, the flat outputs obtained in both these methods rely on local coordinates, and thus they are only valid locally around a nominal operating point.

Mechanical systems often exhibit \textit{symmetry}, which induces a \textit{principal bundle} structure on the configuration manifold and (by Noether's Theorem) the conservation of momentum \cite{Bloch2005}. The role of symmetry in locomotion has been studied in detail for systems whose evolution is governed by a \textit{principal connection} \cite{Ostrowski}. It has long been conjectured that symmetry also plays a role in differential flatness \cite{Murray1995,Murray1997}, however a clear link between the two concepts has yet to be established.  In \cite{Rathinam1998}, it was shown that any flat output of a very special class of symmetric systems must be equivariant; however, symmetry was not leveraged to construct the outputs, nor was equivariance guaranteed for more general systems with symmetry.

\begin{table*}[t]
	\centering
	\def\arraystretch{1.5}%
	\vspace{2pt}
	\caption{A partial catalog of unconstrained mechanical systems known to possess \flatmodifier flat outputs.}
	\begin{tabular}{|c|c|c|c|c|c|}
		\hline
		
		\textbf{System} & \textbf{Configuration Manifold $Q$}  & \textbf{Symmetry Group $G$} & \textbf{Shape Space $S$} & \textbf{Bundle} & \textbf{Reference} \\ \hline

		Planar Rocket (\textit{Fig. \ref{fig:planar_rocket}})   & $SE(2)$ & $\mathbb{R}^2$ & $\mathbb{S}^1$ & Trivial &  \cite{Rathinam1998} \\ \hline

		Planar Aerial Manipulator (\textit{Fig. \ref{fig:planar_aerial_manipulator}}) & $SE(2) \times \mathbb{T}^1$  & $SE(2)$ & $\mathbb{T}^1$ & Trivial & \cite{Thomas2014}  \\ \hline
		
		Aerial Manipulator in 3D & $SE(3) \times \mathbb{T}^2$  & $SE(3)$ & $\mathbb{T}^2$ & Trivial & \cite{Welde2021}  \\ \hline

		Quadrotor  (\textit{Fig. \ref{fig:quadrotor}}) 		& $SE(3) $  & $\mathbb{R}^3 \times \mathbb{S}^1$ 
		& $\mathbb{S}^2$ & Nontrivial & \cite{Watterson2017}
		\\ \hline
		
		Quadrotor with Slung Load 
		& $SE(3) \times \mathbb{S}^2$  & $\mathbb{R}^3 \times \mathbb{S}^1$ & $\mathbb{S}^2 \times \mathbb{S}^2$ & Nontrivial & \cite{Sreenath2013} \\ \hline
		
		Chain of $n$ Spring-Mass Systems &  $\mathbb{R}^n$ & $\mathbb{R}$ & $\mathbb{R}^{n-1}$ &  Trivial & \cite{Blajer2013} \\ \hline
		
	\end{tabular}
	\label{catalog_of_\flatmodifier_flat_systems}
	\vspace{-10pt}
\end{table*}

However, many known flat outputs of mechanical systems (e.g. those cataloged in Table \ref{catalog_of_\flatmodifier_flat_systems})
 can be interpreted as the group variables of a \textit{trivialization} of a principal bundle describing a symmetry of the system. In a similar vein, \cite{Dear2015} examined several locomotion systems with group and shape spaces of equal dimension, showing \textit{partial differential flatness} (a more general yet significantly weaker property) with respect to the group variables of a trivialization, although none were found to be fully differentially flat. With these observations in mind, we raise the following question:
\begin{quote}
	When does a mechanical system with symmetry admit a trivialization of its principal bundle 
	in which the group variables are a flat output of the system?
\end{quote}
Such \textit{\flatmodifier flat outputs} have powerful properties:
\begin{enumerate}[I.]
	\item 	\label{configuration_flatness}
	They are \textit{configuration} flat outputs \cite{Rathinam1998}, which makes it convenient to write constraints on positions, velocities and inputs in terms of flat output derivatives \cite{Thomas2017a}.
	\item \label{equivariance}
	They are \textit{equivariant}, meaning that they preserve the fundamental symmetry found in the physical system, thereby avoiding the introduction of artificial bias.
	\item \label{globality} They are often \textit{global or almost global}, meaning the flat output map is well-defined over all or almost all of the configuration manifold, encompassing a more complete portion of the system's performance envelope.
\end{enumerate}

In what follows, we describe the dynamics of unconstrained mechanical systems with symmetry using the geometry of Riemannian manifolds and principal bundles. We give a simple but important necessary condition and a practical sufficient condition for the existence of a \flatmodifier flat output. The latter amounts to a mild regularity criterion and the existence of a \textit{section} of the bundle that is orthogonal to a certain computable distribution, from which we immediately construct the geometric flat output. We discuss the implications of a nontrivial bundle structure in regards to prohibiting the existence of truly global \flatmodifier flat outputs. The approach is applied to several illustrative example systems to demonstrate the key concepts. 
The results in this paper have broad applications in the control of an important class of multibody robotic systems, including many vehicles and manipulators operating in air or space environments.

\section{Principal Bundle Geometry}

To start, we give a very brief review of principal bundle geometry. We direct the less familiar reader to 
\cite[Chapter 2]{Bloch2005} and \cite[Chapters 3 and 4]{BulloAndLewis} for a far more complete introduction to principal bundles, symmetry, and Riemannian geometry applied to the control of mechanical systems.
\begin{definition}
	A \textit{principal bundle} is a manifold $Q$ with:
	\begin{enumerate}[i.]
		\item ${\Phi_g : Q \rightarrow Q}$ (sometimes written ${g \cdot q}$), a smooth, free, and proper action of a Lie group $G$ on $Q$, and
		\item ${\pi : Q \rightarrow S}$, the smooth projection to the manifold of equivalence classes induced by the action.
	\end{enumerate}
	We call $S$ the \textit{shape space} and $G$ the \textit{symmetry group}.
\end{definition}

\begin{definition}
	\normalfont
	A \textit{section} of a bundle is a smooth map ${\sigma : \region \subseteq S \rightarrow Q}$ such that ${\pi \circ \sigma(s) = s}$ for all ${s \in \region}$. 
\end{definition}

\begin{definition}
	A \textit{trivialization} is a diffeomorphism 
	\begin{equation}
		\begin{gathered}
			\psi : \pi^{-1}(\region) \rightarrow \region \times G \\
			q \mapsto \big(\pi(q), \varphi(q)\big)
		\end{gathered}
	\end{equation}
	such that $\varphi$ is equivariant i.e. 
	${\varphi (g \cdot q) = g \cdot \varphi(q)}$. 
	A bundle is \textit{trivial} when there exists a \textit{global} trivialization (i.e. a trivialization for which ${\region = S}$), otherwise it is \textit{nontrivial}.
\end{definition}

\begin{fact}
	\label{identification_of_sections_and_trivializations}
	The sections 
	and trivializations of a principal bundle
	are in one-to-one correspondence, identified by the relation
	${\psi^{-1}(s,g) = \Phi_g \circ \sigma(s)}$.
	In the previous, $\psi$ is called the \textit{canonical trivialization} corresponding to $\sigma$, and likewise $\sigma$ is called the \textit{canonical section} corresponding to $\psi$.
\end{fact}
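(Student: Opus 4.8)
The plan is to construct the correspondence explicitly in both directions and then check the two constructions are mutually inverse. Given a section $\sigma : \region \to Q$, I would define $F : \region \times G \to \pi^{-1}(\region)$ by $F(s,g) := \Phi_g \circ \sigma(s)$ and argue that $\psi := F^{-1}$ is the desired trivialization; conversely, given a trivialization $\psi$ with $\psi(q) = \big(\pi(q), \varphi(q)\big)$, I would set $\sigma(s) := \psi^{-1}(s, e)$ and verify $\pi \circ \sigma = \id_{\region}$. The relation $\psi^{-1}(s,g) = \Phi_g \circ \sigma(s)$ is then built into the first construction, and specializing to $g = e$ shows it is consistent with the second.

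First come the routine verifications for $F$. Because $S$ is the orbit space and $\pi$ the quotient map, $\pi \circ \Phi_g = \pi$, so $\pi\big(F(s,g)\big) = s$; hence $F$ maps into $\pi^{-1}(\region)$ (in fact into the single fiber over $s$), and any inverse of $F$ must have first component $\pi(q)$. The map $F$ is smooth, being a composite of $\sigma$ with the action. It is bijective onto $\pi^{-1}(\region)$: for surjectivity, $q$ and $\sigma(\pi(q))$ lie in the same fiber, hence the same orbit, so $q = \Phi_g(\sigma(\pi(q)))$ for some $g$; for injectivity, $F(s_1,g_1) = F(s_2,g_2)$ forces $s_1 = s_2 =: s$ upon applying $\pi$, and then $\Phi_{g_1}(\sigma(s)) = \Phi_{g_2}(\sigma(s))$ together with freeness of the action gives $g_1 = g_2$.

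The one step that uses genuine differential geometry is smoothness of $F^{-1}$, which I would obtain by showing $F$ is a local diffeomorphism. Since $\dim(\region \times G) = \dim S + \dim G = \dim Q$, it suffices to check that $\d F_{(s,g)}$ is injective everywhere. Differentiating $\pi \circ F = \pr_{\region}$ shows that $\d F_{(s,g)}(v,\xi) = 0$ forces $v = 0$; and with $v = 0$ the vector $\d F_{(s,g)}(0,\xi)$ is, up to the diffeomorphism $\Phi_g$, the value at $\sigma(s)$ of the infinitesimal generator of $\xi$, which vanishes only for $\xi = 0$ because a free action has no nontrivial isotropy (a nonzero generator vanishing at a point would make $\exp(t\xi)$ fix that point for all $t$). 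Thus $F$ is a bijective local diffeomorphism and hence a diffeomorphism; writing $\psi := F^{-1}$ as $q \mapsto \big(\pi(q), \varphi(q)\big)$, the map $\varphi$ inherits smoothness from $\psi$, and equivariance is immediate: from $\psi^{-1}(s,g) = \Phi_g(\sigma(s))$ and $\Phi_{hg} = \Phi_h \circ \Phi_g$ we get $\psi^{-1}(s, hg) = \Phi_h\big(\psi^{-1}(s,g)\big)$, i.e. $\varphi\big(\Phi_h(q)\big) = h \cdot \varphi(q)$.

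It remains to see the two assignments are mutual inverses. Starting from $\sigma$, the associated $\psi$ satisfies $\psi^{-1}(s,e) = \Phi_e(\sigma(s)) = \sigma(s)$, recovering $\sigma$. Starting from $\psi$ and setting $\sigma(s) := \psi^{-1}(s,e)$, we have $\varphi(\sigma(s)) = e$, so by equivariance $\varphi\big(\Phi_g(\sigma(s))\big) = g$ while $\pi\big(\Phi_g(\sigma(s))\big) = s$; hence $\Phi_g(\sigma(s)) = \psi^{-1}(s,g)$, so the trivialization rebuilt from $\sigma$ is $\psi$ itself. I expect the dimension-count/infinitesimal-generator argument for $F$ being a local diffeomorphism to be the only non-mechanical point; everything else follows directly from freeness of the action and the identification of $\pi$ with the orbit projection.
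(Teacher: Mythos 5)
Your proof is correct. The paper states this as a \emph{Fact} and offers no proof of its own (it is a standard result on principal bundles, cf.\ Kobayashi--Nomizu), so there is nothing to compare against; your construction --- $F(s,g) = \Phi_g\circ\sigma(s)$ one way, $\sigma(s)=\psi^{-1}(s,e)$ the other, with bijectivity from freeness and the identification of fibers with orbits, smoothness of the inverse via the dimension count $\dim S + \dim G = \dim Q$ together with injectivity of $\d F$ (using that a free action has no vanishing nonzero infinitesimal generators), and the final check that the two assignments are mutually inverse --- is exactly the standard argument and handles the one genuinely non-mechanical step (smoothness of $F^{-1}$) properly.
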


	\begin{definition}
	\normalfont	The \textit{vertical subbundle} ${VQ \subset TQ}$ is given by 
	$VQ = \ker T\pi$. \normalfont	A \textit{principal connection} is the assignment of a \textit{horizontal subbundle} $HQ \subset TQ$ that is:
	\begin{enumerate}[i.]
		\item complementary to $VQ$ (i.e. ${TQ = VQ \oplus HQ}$), 
		\item equivariant (i.e. $T_q \Phi_g \, H_qQ = H_{g\cdot q} Q$), and
		\item assigned smoothly over $Q$.
	\end{enumerate}
\end{definition}

\begin{definition}
	\label{canonical_flat_connection}
	Given a trivialization ${q \mapsto \big(\pi(q),\varphi(q)\big)}$,
	the \textit{canonical flat connection} \cite[Chapter 2, Section 9]{KobayashiAndNomizu1} is the principal connection satisfying ${HQ = \ker T\varphi}$. 
	It is readily seen that the canonical flat connection is tangent to the image of the canonical section as a submanifold of $Q$.
\end{definition}
\begin{definition}
	Given a basis $e_a$ for $\mathfrak{g}$ (the Lie algebra of $G$) and coordinates $s^\alpha$ for $S$, the \textit{standard basis} for $TQ$ associated with a particular principal connection $HQ$
	is the union of the bases of vector fields%
	\footnote{Throughout, 
		we use index notation and the Einstein summation convention. For clarity, we use indices 
		$a,b,c$ for $G$  and $\alpha, \beta, \gamma$ for $S$.}
	\begin{equation}
		V_a = (e_a)_Q, \ \ H_\alpha = \left(\tfrac{\partial}{\partial s^\alpha}\right)^{HQ}
	\end{equation}
	spanning $VQ$ and $HQ$.
	Here, ${\xi_Q \in \Gamma(TQ)}$ is the \textit{infinitesimal generator} associated with any ${\xi \in \mathfrak{g}}$ given by $
		{q \mapsto 
			\left.
			\frac{d}{dt}\big(
			\Phi_{\exp t \xi} \ q 
			\big)
			\right|_{t = 0}}$, while
	${X^{HQ} \in \Gamma(HQ)}$ is the \textit{horizontal lift} of any vector field ${X \in \Gamma(TS)}$ i.e. the unique horizontal vector field satisfying $T\pi \circ X^{HQ} = X$.
\end{definition}
	Note that $\Gamma(TM)$ denotes the set of vector fields on $M$.

\section{Unconstrained Mechanical Systems}

\begin{definition}
	\normalfont		
	An \textit{unconstrained mechanical system on a principal bundle} (briefly, a \textit{system}) consists of:
	\begin{enumerate}[i.]
		\item a \textit{Lagrangian} ${L : TQ \rightarrow \mathbb{R}}$ in the form
		\begin{equation}
			\begin{gathered}
				v_q \mapsto K(v_q) - P \circ \pi_Q(v_q),
			\end{gathered}
		\end{equation}
		where ${K : v_q \mapsto \frac{1}{2}\metric{v_q}{v_q}}$ is the \textit{kinetic energy} described using a Riemannian metric, ${P: Q \rightarrow \mathbb{R}}$ is the \textit{potential energy}, and ${\pi_Q : TQ \rightarrow Q}$ is the standard tangent bundle projection,
		\item a \textit{control codistribution} ${F \subseteq T^*Q}$, containing the controlled external forces that can be applied, and
		\item a (possibly nontrivial) \textit{principal bundle structure} ${\pi : Q \rightarrow S}$ induced by a $G$-action ${\Phi_g : Q \rightarrow Q}$.
	\end{enumerate}
\end{definition}
Note that the term \textit{unconstrained} refers to the absence of nonholonomic constraints; clearly, a system evolving on a manifold is in a certain sense subject to holonomic constraints describing the embedding of the manifold in some ambient Euclidean space.

\begin{definition}
	A system is \textit{symmetric} if $F$ is equivariant with respect to the action,
	i.e. $T\Phi_g^* \, F_q = F_{g \cdot q}$,
	and $L$ is invariant with respect to the lifted action,
 	 i.e. $L \circ T\Phi_g (v_q) = L(v_q)$.
	 If $K$ is invariant and $F$ is equivariant but $P$ is not invariant, the system is said to exhibit \textit{broken symmetry} \cite{Contreras2022}.
\end{definition}

The broken symmetry case is often seen in practice for aerial robots operating under the effect of gravity.

\begin{figure*}[t]
	\begin{minipage}{.3\textwidth}
	\centering
	\includegraphics{rocket.tikz} 
	\captionof{figure}{Planar Rocket}
	\label{fig:planar_rocket}
\end{minipage}%
	\begin{minipage}{.4\textwidth}
		\centering
		\includegraphics{aerial_manipulator.tikz}
		\captionof{figure}{Planar Aerial Manipulator}
		\label{fig:planar_aerial_manipulator}
	\end{minipage}%
	\begin{minipage}{.3\textwidth}
		\centering
		\includegraphics{quadrotor.tikz}
		\captionof{figure}{Quadrotor}
		\label{fig:quadrotor}
	\end{minipage}
	\vspace{-17pt}
\end{figure*}

\begin{fact}[Forced Geodesic Equation]
	Unconstrained mechanical systems evolve along curves ${\curve : \left[0,1\right] \rightarrow Q}$ satisfying the system of differential equations given by
	\begin{equation}
		\cdac \dot{\curve} + \grad P = f^\sharp,
		\label{eq:forced_geodesic_equation}
	\end{equation}
	where ${\nabla : \vf{Q} \times \vf{Q} \rightarrow \vf{Q}}$ is the \textit{Riemannian} (or \textit{Levi-Civita}) \textit{connection} arising from the kinetic energy, $\cdac$ is the \textit{covariant derivative} induced along the curve $\curve$, and ${f: \left[0,1\right] \rightarrow F}$ are the applied external control forces.\footnote{\, The map ${\cdot^\sharp : T^*Q \rightarrow TQ}$ is the \textit{musical isomorphism} satisfying ${\metric{f^\sharp}{v} = \pairing{f}{v}}$, whose inverse is denoted ${\cdot^\flat : TQ \rightarrow T^*Q}$.}
\end{fact}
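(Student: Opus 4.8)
The plan is to obtain \eqref{eq:forced_geodesic_equation} from the Lagrange--d'Alembert variational principle, which states that a curve $\curve$ is a trajectory of the forced system exactly when
\[
	\delta \int_0^1 L(\dot{\curve})\, \d t + \int_0^1 \pairing{f}{\delta \curve}\, \d t = 0
\]
for every variation $\delta \curve$ that vanishes at the endpoints. The only structural facts I would invoke are the two defining properties of the Levi--Civita connection $\nabla$ associated with $\metric{\cdot}{\cdot}$: that it is \emph{metric-compatible} and \emph{torsion-free}.

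First I would vary the kinetic energy term. Writing the variation as a vector field $V = \delta\curve$ along $\curve$ and letting $\nabla_t$, $\nabla_\epsilon$ denote covariant differentiation along the time and variation directions of a two-parameter family $\curve_\epsilon$, metric compatibility gives $\tfrac{\d}{\d\epsilon}\big|_{\epsilon=0} K(\dot{\curve}_\epsilon) = \metric{\nabla_\epsilon \dot{\curve}}{\dot{\curve}}$, while the torsion-free (symmetry) property gives $\nabla_\epsilon \dot{\curve} = \nabla_t V$. Integrating by parts in $t$ and discarding the boundary term (using $V(0) = V(1) = 0$) rewrites the kinetic variation as $-\int_0^1 \metric{\cdac\dot{\curve}}{V}\, \d t$.

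Next, the variation of the potential term is $-\int_0^1 \d P(V)\, \d t = -\int_0^1 \metric{\grad P}{V}\, \d t$ by definition of $\grad$, and the virtual-work term is $\int_0^1 \metric{f^\sharp}{V}\, \d t$ by definition of $\cdot^\sharp$. Collecting terms, the principle becomes $\int_0^1 \metric{-\cdac\dot{\curve} - \grad P + f^\sharp}{V}\, \d t = 0$ for all admissible $V$; since $\metric{\cdot}{\cdot}$ is nondegenerate and $V$ is otherwise arbitrary, the fundamental lemma of the calculus of variations yields \eqref{eq:forced_geodesic_equation}.

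I do not expect a genuine obstacle here: this is the standard geometric repackaging of the Euler--Lagrange equations for a Lagrangian of ``kinetic minus potential'' type, so the work is purely bookkeeping --- chiefly the justification of the interchange $\nabla_\epsilon \dot{\curve} = \nabla_t V$ and the vanishing of the endpoint contribution. As a coordinate cross-check, the forced Euler--Lagrange equations for $L = \tfrac12 g_{ij}\dot{\curve}^i\dot{\curve}^j - P$ read $g_{kj}\ddot{\curve}^j + \tfrac12(\partial_i g_{kj} + \partial_j g_{ki} - \partial_k g_{ij})\dot{\curve}^i\dot{\curve}^j + \partial_k P = f_k$, and raising the free index exhibits the Christoffel symbols of $\nabla$, the components of $\grad P$, and the components of $f^\sharp$, reproducing \eqref{eq:forced_geodesic_equation}.
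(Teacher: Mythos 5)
Your derivation is correct. The paper itself offers no proof of this statement --- it is labeled a \emph{Fact} and treated as standard background (the text immediately after remarks only that it is ``the Riemannian equivalent of the Euler--Lagrange equations,'' deferring to the cited references such as Bullo and Lewis). Your Lagrange--d'Alembert argument is exactly the textbook route to it: metric compatibility and the torsion-free symmetry lemma $\nabla_\epsilon \dot{\curve} = \nabla_t V$ convert the kinetic variation into $-\int_0^1 \metric{\cdac\dot{\curve}}{V}\,\d t$ after integration by parts, the definitions of $\grad$ and $\sharp$ handle the remaining terms, and nondegeneracy of the metric plus the fundamental lemma finishes it. The coordinate cross-check correctly exhibits the Christoffel symbols of the first kind and confirms the identification. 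There is no gap; the only thing worth stating explicitly if you wrote this out in full would be that the two-parameter family $\curve_\epsilon$ can always be chosen so that $\partial_\epsilon \curve_\epsilon|_{\epsilon=0}$ realizes an arbitrary vector field $V$ along $\curve$ vanishing at the endpoints, which is what licenses the final appeal to arbitrariness of $V$.
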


The previous equation is the Riemannian equivalent of the Euler-Lagrange equations more often seen in robotics.

\begin{definition}
	\normalfont
	The \textit{unactuated subbundle}, denoted $UQ$, is the coannihilator of the control codistribution, 
	given by
	\begin{equation}
		U_q Q = \big\{ u_q \in T_q Q  :   \pairing{f_q}{u_q} = 0 \ \forall \   f_q \in F_q  \big\},
	\end{equation}
	where $\pairing{\cdot}{\cdot}$ is the natural pairing of vectors and covectors. 
	A system is said to be \textit{fully actuated} when $UQ$ is empty and \textit{underactuated} with degree $u = \rank UQ$ otherwise.
\end{definition}

\begin{definition}
	A curve ${\curve}$ is \textit{dynamically feasible} if there exists a covector field ${f}$ along $\curve$ 
	satisfying 
	\eqref{eq:forced_geodesic_equation}.
\end{definition}

\begin{proposition}[Implicit Dynamics]
	A given curve ${{\curve} : \left[0,1\right] \rightarrow Q}$ is dynamically feasible if and only if 
	\begin{equation}
		\label{eq:implicit_dynamics}
		\metric{\unactuated}{\cdac \dot{\curve} + \grad P} = 0  \textrm{\, for all \,} \unactuated \in \Gamma(UQ).
	\end{equation}
\end{proposition}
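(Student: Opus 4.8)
The plan is to show that dynamic feasibility of $\curve$ — which by definition asserts the existence of some $f : [0,1] \to F$ satisfying the forced geodesic equation \eqref{eq:forced_geodesic_equation} — is equivalent to the pointwise orthogonality condition \eqref{eq:implicit_dynamics}. The key observation is that \eqref{eq:forced_geodesic_equation} can be rewritten as $\big(\cdac \dot{\curve} + \grad P\big)^\flat = f$, using the musical isomorphism to move everything into $T^*Q$. Feasibility then says precisely that the covector field $\big(\cdac \dot{\curve} + \grad P\big)^\flat$, which is determined entirely by $\curve$, lies in the control codistribution $F$ at each time. So the whole statement reduces to characterizing membership in $F_q$ via the unactuated subbundle $UQ$.

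First I would establish the forward direction: if $\curve$ is dynamically feasible with force $f$, then for any $\unactuated \in \Gamma(UQ)$ we have $\metric{\unactuated}{\cdac \dot{\curve} + \grad P} = \metric{\unactuated}{f^\sharp} = \pairing{f}{\unactuated} = 0$, where the middle equality is the defining property of $\sharp$ and the last is the definition of $UQ$ as the coannihilator of $F$. This is immediate once the musical isomorphism is invoked. Second, for the converse I would use the fact that $UQ$ is, by construction, the coannihilator of $F$, together with the finite-dimensionality of each fiber $T_q^*Q$: in a finite-dimensional vector space with a nondegenerate pairing, the coannihilator of the coannihilator of a subspace $F_q$ is $F_q$ itself (equivalently, $F_q = (U_qQ)^\circ$, the annihilator of $U_qQ$). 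Hence if $\metric{\unactuated_q}{\cdac \dot{\curve} + \grad P} = 0$ for all $\unactuated_q \in U_qQ$, then $\pairing{(\cdac \dot{\curve} + \grad P)^\flat}{\unactuated_q} = 0$ for all such $\unactuated_q$, so $(\cdac \dot{\curve} + \grad P)^\flat \in F_q$; defining $f := (\cdac \dot{\curve} + \grad P)^\flat$ pointwise gives a covector field along $\curve$ taking values in $F$ and satisfying \eqref{eq:forced_geodesic_equation}, so $\curve$ is dynamically feasible.

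The main technical point to be careful about is smoothness/regularity of the constructed $f$: one must check that $f = (\cdac \dot{\curve} + \grad P)^\flat$ is a legitimate covector field along $\curve$ (i.e. sufficiently regular in $t$), but this is automatic since $\curve$ is assumed smooth, $\grad P$ and $\flat$ are smooth, and the covariant derivative of a smooth curve's velocity is smooth. A second minor subtlety is whether "$\metric{\unactuated}{\cdots} = 0$ for all $\unactuated \in \Gamma(UQ)$" (sections) is equivalent to the pointwise condition "$\metric{\unactuated_q}{\cdots} = 0$ for all $\unactuated_q \in U_qQ$"; these agree because $UQ$ is a vector subbundle, hence locally spanned by smooth sections, so the vanishing against all sections forces pointwise vanishing against the whole fiber and vice versa. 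I do not anticipate a serious obstacle here — the proposition is essentially a restatement of \eqref{eq:forced_geodesic_equation} after transporting it through the musical isomorphism and invoking the double-coannihilator identity in finite dimensions.
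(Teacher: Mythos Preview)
Your proposal is correct and follows essentially the same approach as the paper, namely projecting the forced geodesic equation onto the unactuated subbundle and using that $\metric{\unactuated}{f^\sharp} = \pairing{f}{\unactuated} = 0$ for $\unactuated \in \Gamma(UQ)$. The paper's proof is a one-sentence sketch of this idea; your write-up is actually more complete, since you spell out the converse via the double-coannihilator identity $F_q = (U_qQ)^\circ$ and address the smoothness of the constructed $f$, points the paper leaves implicit.
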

\begin{proof}
	This amounts to a projection of the forced geodesic equation onto the unactuated subbundle, which eliminates the external control forces since $\metric{X}{f^\sharp} = \pairing{f}{X} = 0$. 
\end{proof}

\begin{proposition}[Local Implicit Dynamics]
	\label{implicit_dynamics_in_components}
	For any principal connection $HQ$ and any trivialization ${\psi : \pi^{-1}(\region) \rightarrow \region \times G}$%
	 , the implicit dynamics
	can be expressed 
	locally
	as the zero level set of an $\mathbb{R}^{u}$-valued map
	\begin{equation}
			\label{vector_valued_implicit_dynamics}
			E_i(s^\alpha, \dot{s}^\alpha,\ddot{s}^\alpha; g,\xi^a,\dot{\xi}^a) = 0,
	\end{equation}
	where for any given curve ${q : \left[0,1\right] \rightarrow Q}$, the functions ${\xi^a, \dot{s}^\alpha : \left[0,1\right] \rightarrow \mathbb{R}}$ satisfy ${\dot{q} = \xi^a V_a + \dot{s}^\alpha  H_\alpha}$, i.e. they express the curve's velocity in the standard basis.\footnote{While $\dot{s} = \dot{s}^\alpha \tfrac{\partial}{\partial s^\alpha}$ is always the \textit{shape velocity}, only when $HQ$ is the canonical flat connection is $\xi = \xi^a e_a$ the \textit{spatial velocity} associated with the group variables i.e. $\xi = \dot{g} \cdot g^{-1}$. In Lagrangian reduction, where $HQ$ is the \textit{mechanical connection}, $\xi$ is called the \textit{locked velocity} \cite{BKMM}.}
\end{proposition}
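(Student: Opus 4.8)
The plan is to convert the tensorial identity \eqref{eq:implicit_dynamics} into a finite list of scalar equations by fixing a local frame for $UQ$, then to expand the geodesic term in the standard basis and verify that every coefficient that appears depends on configuration alone. First I would shrink $\region$, if necessary, so that it simultaneously supports the coordinates $s^\alpha$, the trivialization $\psi$, and a smooth frame $X_1,\dots,X_u$ for $UQ$ over $\pi^{-1}(\region)$; such a frame exists because $UQ$ is a rank-$u$ vector subbundle of $TQ$. Since $\metric{\cdot}{\cdot}$ is tensorial (in particular $C^\infty(Q)$-linear) in its first argument, for a curve $\curve$ whose image lies over $\region$ the condition \eqref{eq:implicit_dynamics} holds if and only if the $u$ scalar functions $E_i := \metric{X_i}{\cdac \dot{\curve} + \grad P}$ vanish identically along $\curve$; it then remains only to exhibit the functional form of each $E_i$.

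Next I would expand the velocity in the standard basis. Writing the standard basis collectively as $Z_1,\dots,Z_{\dim Q}$ (the $V_a$ listed before the $H_\alpha$) and the matching velocity components as $\eta^A$ (the $\xi^a$ before the $\dot s^\alpha$), we have $\dot{\curve} = \eta^A Z_A$, so the product rule for the covariant derivative along $\curve$ gives
\[
\cdac \dot{\curve} \;=\; \dot\eta^{\,C}\,Z_C + \eta^{\,A}\eta^{\,B}\,\nabla_{Z_A}Z_B \;=\; \big(\dot\eta^{\,C} + \Gamma^{C}_{AB}\,\eta^{\,A}\eta^{\,B}\big)Z_C ,
\]
where $\Gamma^{C}_{AB}$ are the connection coefficients of the Levi-Civita connection $\nabla$ in the standard basis. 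Writing $X_i = X_i^{\,C}Z_C$, $\grad P = (\grad P)^{C}Z_C$, and $M_{CD} := \metric{Z_C}{Z_D}$, this yields
\[
E_i \;=\; M_{CD}\,X_i^{\,C}\Big(\dot\eta^{\,D} + \Gamma^{D}_{AB}\,\eta^{\,A}\eta^{\,B} + (\grad P)^{D}\Big).
\]

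Finally I would observe that $M_{CD}$, $\Gamma^{C}_{AB}$, $X_i^{\,C}$, and $(\grad P)^{C}$ are all smooth functions on $\pi^{-1}(\region)$ --- each is assembled solely from the metric, the connection $HQ$, the control codistribution, and the potential, none of which involves velocities --- so, transported through the diffeomorphism $\psi$, they become smooth functions of $(s^\alpha,g)$. Since $\eta^A$ merely repackages $(\xi^a,\dot s^\alpha)$ and $\dot\eta^A$ repackages $(\dot\xi^a,\ddot s^\alpha)$, the last display exhibits $E_i$ as a smooth function of $(s^\alpha,\dot s^\alpha,\ddot s^\alpha;\,g,\xi^a,\dot\xi^a)$ --- in fact affine in $(\ddot s^\alpha,\dot\xi^a)$ and quadratic in $(\dot s^\alpha,\xi^a)$ --- which is exactly the form \eqref{vector_valued_implicit_dynamics}.

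I do not expect a substantive obstacle: the entire content is the bookkeeping claim in the last paragraph, namely that the ``coefficient'' tensors are functions of configuration only, so that all dependence on velocities and accelerations enters through $\eta^A$ and $\dot\eta^A$. The two points worth stating carefully are the reduction to exactly $u$ equations (using $\rank UQ = u$ together with tensoriality of the metric) and the fact --- already noted in the footnote to the proposition --- that $\dot s^\alpha$ genuinely is the shape velocity regardless of which principal connection $HQ$ is used, since $T\pi$ annihilates each $V_a$ and carries $H_\alpha$ to $\tfrac{\partial}{\partial s^\alpha}$.
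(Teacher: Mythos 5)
Your proposal is correct and follows essentially the same route as the paper: expand $\nabla_{\dot q}\dot q$ in the standard basis $\{V_a, H_\alpha\}$, pick a local frame $X_i$ for $UQ$, and use bilinearity/tensoriality of the metric to reduce \eqref{eq:implicit_dynamics} to $u$ scalar equations whose coefficients depend only on the configuration $(s^\alpha, g)$. The only cosmetic difference is that the paper collects the cross terms $\nabla_{V_a}H_\beta$ and $\nabla_{H_\beta}V_a$ using $\bracket{V_a}{H_\beta}=0$ (a simplification it needs later for Proposition \ref{shape_inpedendent_implicit_dynamics}), whereas you keep the full Christoffel expansion, which is equally valid here.
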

\begin{proof}
Via the basic properties of affine connections, we may express the geometric acceleration of any such curve as \cite{Bullo1996}
\begin{equation}
	\label{\flatmodifier_acceleration}
	\begin{aligned}
		\cdac \dot{\curve} = 
		\dot{\xi}^a &V_a + 
		\ddot{s}^\alpha  H_\alpha + 
		\xi^a  \xi^b  \nabla_{V_a} {V_b} \ + 
		\\
		&\ 2 \hspace{1pt} \xi^a  \dot{s}^\beta \nabla_{V_a} {H_\beta} +
		\dot{s}^\alpha  \dot{s}^\beta  \nabla_{H_\alpha} {H_\beta} 
		,
	\end{aligned}
\end{equation}
where we can collect the cross terms due to the equivariance of horizontal lifts i.e. 
$\nabla_{V_a}{H_\beta} - \nabla_{H_\beta}{V_a} = 
{\bracket{V_a}{H_\beta} = 0}$. 
Furthermore, a basis $X_i$ for $UQ$ exists over any sufficiently local region of $\pi^{-1}(U)$, and
due to the bilinearity of the metric, \eqref{eq:implicit_dynamics} holds if and only if it holds for each $X_i$ in the basis. Therefore, substituting \eqref{\flatmodifier_acceleration} into \eqref{eq:implicit_dynamics} for each $X_i$ and expressing $q$ in terms of $s$ and $g$ using the trivialization gives us each component of our local vector-valued constraint.
\end{proof}

The following distribution, a generalization of one defined in \cite{Rathinam1998}, will feature in our sufficient condition for flatness, and can be easily computed directly from the system model.
\begin{definition}
	The \textit{underactuation distribution} $\Delta \subseteq TQ$ is 
	\begin{equation}
		\Delta = \spn \big\{X, \nabla_Y X : X \in \Gamma(UQ), \ Y \in \Gamma(TQ) \big\}.
	\end{equation}
\end{definition}

	\begin{proposition}
		\label{equivariance_of_delta}
		For any system with (broken) symmetry, $\Delta$ is equivariant.
	\end{proposition}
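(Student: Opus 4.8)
\emph{Proof plan.} Recall that $\Delta$ being equivariant means $T_q\Phi_g\,\Delta_q = \Delta_{g\cdot q}$ for all $q\in Q$ and $g\in G$. The strategy is to reduce this to two structural facts and then combine them by pushing a spanning set of $\Delta_q$ forward through $\Phi_g$. The first fact is that the unactuated subbundle is equivariant, $T_q\Phi_g\,U_qQ = U_{g\cdot q}Q$; the second is that each $\Phi_g$ is an isometry of the kinetic energy metric and hence preserves the Riemannian connection $\nabla$.

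For the first fact I would give the standard argument that the coannihilator of an equivariant codistribution is equivariant: for $u_q \in U_qQ$ and any covector $f \in F_{g\cdot q}$, the equivariance of $F$ together with naturality of the pairing under $\Phi_g$ shows that $f$ annihilates $T_q\Phi_g\,u_q$, so $T_q\Phi_g\,U_qQ \subseteq U_{g\cdot q}Q$; applying the same with $g^{-1}$ at the point $g\cdot q$ gives equality. In particular, the pushforward $(\Phi_g)_*X$ of any $X \in \Gamma(UQ)$ is again a section of $UQ$, while $(\Phi_g)_*Y \in \vf{Q}$ for any $Y \in \vf{Q}$ since $\Phi_g$ is a diffeomorphism.

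For the second fact I would observe that the broken-symmetry hypothesis still includes invariance of the kinetic energy $K$ (only $P$ may fail to be invariant), so each $\Phi_g$ is an isometry of $\metric{\cdot}{\cdot}$; since $\nabla$ is determined by the metric alone, uniqueness of the Levi-Civita connection yields the intertwining relation $(\Phi_g)_*(\nabla_Y X) = \nabla_{(\Phi_g)_*Y}\,(\Phi_g)_*X$. To finish, I would take an arbitrary $v \in \Delta_q$, write it as a linear combination of terms $X(q)$ with $X \in \Gamma(UQ)$ and $(\nabla_Y X)(q)$ with $X \in \Gamma(UQ)$, $Y \in \vf{Q}$, and push it forward: $T_q\Phi_g\,X(q) = \big((\Phi_g)_*X\big)(g\cdot q) \in \Delta_{g\cdot q}$ by the first fact, and $T_q\Phi_g\,(\nabla_Y X)(q) = \big(\nabla_{(\Phi_g)_*Y}(\Phi_g)_*X\big)(g\cdot q) \in \Delta_{g\cdot q}$ by the two facts together. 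Hence $T_q\Phi_g\,\Delta_q \subseteq \Delta_{g\cdot q}$, and the reverse inclusion follows by replacing $g$ with $g^{-1}$, giving equality. I do not anticipate a genuine obstacle: the crux is simply the intertwining of $\nabla$ with isometries, and the only things to keep track of are the compatibility of the vector-field pushforward with evaluation at points and with the connection, along with the fact that broken symmetry still supplies invariance of the metric.
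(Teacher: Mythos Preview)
Your proposal is correct and follows essentially the same approach as the paper: both deduce equivariance of $UQ$ from equivariance of $F$, then use that the Levi-Civita connection of an invariant metric is preserved by the action to handle the $\nabla_Y X$ terms. The paper phrases the second step as ``the covariant derivative of an equivariant vector field along another equivariant vector field is equivariant,'' whereas you give the slightly more general intertwining identity $(\Phi_g)_*(\nabla_Y X) = \nabla_{(\Phi_g)_*Y}(\Phi_g)_*X$, but this is the same mechanism and your version is simply more explicit.
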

	\begin{proof}
		Following the same line of reasoning as \cite{Rathinam1998}, the claim follows from the equivariance of $F$ and thus of $UQ$,
		and the fact that for the Riemannian connection of an invariant metric, the covariant derivative of an equivariant vector field along another equivariant vector field is  equivariant. 
	\end{proof}

\subsection{Example Systems}

We now introduce three systems (see Fig. \ref{fig:planar_rocket}-\ref{fig:quadrotor}) to serve as running examples for the remainder of the paper.

\begin{example}[name=Planar Rocket,label=example:planar_rocket] 
	Also known as the ducted fan, this classic example of a flat system has configuration manifold ${Q = SE(2)}$, to which we assign coordinates ${q = (x_1,x_2,\theta)}$ (and the corresponding basis of coordinate vector fields) corresponding to the center of mass position and the angle of body rotation. 
	The system is given by\footnote{In examples, (co)vector fields are described by column vectors whose components are their coefficients in a chosen basis of (co)vector fields, while (co)distributions are described by matrices whose columns are (co)vector fields that together span the (co)distribution, and the metric is described by the components of the inertia tensor in the basis of vector fields.} 
	\begin{equation}
			\setlength\arraycolsep{2pt}
		M = \begin{bmatrix}
			m & 0 & 0 \\
			0 & m & 0 \\
			0 & 0 & J
		\end{bmatrix}, \,
		P = m\,g\,x_2, \,
		F = \begin{bmatrix}
			\cos \theta & -\sin \theta \\
			\sin \theta &  \cos \theta \\
			r &        0
		\end{bmatrix}
	\end{equation}
	which exhibit broken symmetry with respect to the action of ${(g_1,g_2) \in \mathbb{R}^2}$ given by 
	\begin{equation}
		\Phi_g : (x_1,x_2,\theta) \mapsto (x_1+g_1,x_2+g_2,\theta).
	\end{equation}
	This action induces a principal bundle structure over $\mathbb{S}^1$, where the projection map is given by ${\pi : SE(2) \rightarrow \mathbb{S}^1 , \, q \mapsto \theta}$.
	The unactuated subbundle and underactuation distribution can be computed as
	\begin{equation}
			UQ = \begin{bmatrix}
			-r \cos \theta \\
			-r \sin \theta \\
			1
		\end{bmatrix}, \ \Delta = 
		\begin{bmatrix}
			-r \cos \theta & \hphantom{-} r \sin \theta \\
			-r \sin \theta & -r \cos \theta \\
			1 &          0
		\end{bmatrix}.
	\end{equation}
\end{example}
\begin{example}[name=Planar Aerial Manipulator,label=example:planar_aerial_manipulator]

	The configuration manifold is ${Q = SE(2) \times \mathbb{T}^1}$, to which we assign coordinates ${q = (x_1,x_2,\theta,\phi)}$ such that ${(x_1,x_2,\theta)}$ describe the end effector pose in $SE(2)$ while ${\phi \in \mathbb{T}^1}$ is the joint angle. The system can be shown to exhibit  broken symmetry\footnote{The explicit form of the metric tensor is somewhat lengthy even for this two-body system, so we list only the essential details for brevity.} with respect to displacements in the plane, which can be described as the action of ${(g_1,g_2,g_3) \in SE(2)}$ as 
	\begin{equation}
		\Phi_g : 
		\begin{pmatrix}
			x_1 \\ 
			x_2 \\ 
			\theta \\ 
			\phi 
		\end{pmatrix}
		\mapsto
		\begin{pmatrix}
			x_1 \cos g_3 - x_2 \sin g_3 + g_1 \\ 
			x_1 \sin g_3 + x_2 \cos g_3 + g_2 \\ 
			\theta + g_3\\ 
			\phi 
		\end{pmatrix},
	\end{equation}
	inducing a principal bundle structure over $\mathbb{T}^1$, where the projection map is given by ${\pi : SE(3) \times \mathbb{T}^1 \rightarrow \mathbb{T}^1, \, q \mapsto \phi}$. The unactuated subbundle and the underactuation distribution can be computed as
	\begin{equation}
			UQ = 
		\begin{bmatrix} 
			\setlength\arraycolsep{3pt}
			-\sin(\phi + \theta) \\
			\hphantom{-}\cos(\phi + \theta) \\
			0 \\
			0
		\end{bmatrix}
		, \ \
		\Delta = 
		\begin{bmatrix} 
			\setlength\arraycolsep{3pt}
			1 & 0 \\
			0 & 1 \\ 
			0 & 0 \\ 
			0 & 0    
		\end{bmatrix}.
	\end{equation}
\end{example}
\begin{example}[name=Quadrotor,label=example:quadrotor]

	\label{quadrotor_example}

	The configuration manifold is ${Q = SE(3)}$, which can be parametrized using homogeneous coordinates as
	\begin{equation}
	\setlength\arraycolsep{3pt}
q = \begin{pmatrix}
	R & x \\
	0_{1\times 3} & 1
\end{pmatrix}
	\end{equation}
	where ${R \in SO(3)}$ is the rotation from the body frame to the world frame and ${x \in \mathbb{R}^3}$ is the position of the center of mass.
	We use the usual basis of vector fields for $TSE(3)$, corresponding to the components of the linear and angular velocities along body-fixed axes. The system is given by\footnote{Assuming the rotational inertia is symmetric about the thrust axis gives the system broken symmetry for the Abelian group action given, as opposed to considering a non-Abelian subgroup of $SE(3)$. However, the flat outputs obtained are valid even without this assumption due to Remark \ref{relax_symmetry}.}
	\begin{equation}
		\begin{gathered}
				\setlength\arraycolsep{3pt}
			M = 
			\textrm{diag}(
			\begin{bmatrix}
				m & m & m & J_{xx} & J_{xx} & J_{zz}
			\end{bmatrix}
			), 
			\\
			P = m\,g\,x_3, \ \ 
			F = \begin{bmatrix}
				e_3 & 0_{3 \times 3} \\
				0_{3 \times 1} & \mathrm{I}_{3\times3}
			\end{bmatrix}
		\end{gathered}
	\end{equation}
	which exhibit broken symmetry with respect to the action of 
	$(g_{123},g_4) \in \mathbb{R}^3 \times \mathbb{S}^1$ 
	given by 
	\begin{equation}
			\setlength\arraycolsep{3pt}
		\Phi_g : 
		\begin{pmatrix}
			{R} & {x} \\
			0_{1\times 3} & 1
		\end{pmatrix} 
		\mapsto
		\begin{pmatrix}
			R \cdot \rot_{e_3}(g_4) & x + {g}_{123} \\
			0_{1\times 3} & 1
		\end{pmatrix}
		.
	\end{equation}
	comprised of a translation in all three world-fixed axes and a rotation around the body-fixed thrust axis. 
	This induces a nontrivial bundle over $\mathbb{S}^2$, where the projection map is ${\pi : SE(3) \rightarrow \mathbb{S}^2 , \, q \mapsto R \cdot e_3}$. The unactuated subbundle and underactuation distribution are given by 
	\begin{equation}
			\setlength\arraycolsep{3pt}
		UQ = \begin{bmatrix}
			e_1 & e_2 \\
			0_{3 \times 1} & 0_{3 \times 1}
		\end{bmatrix} 
		, \ \ 
		{\Delta} = 
		\begin{bmatrix}
			\mathrm{I}_{3\times3} \\
			0_{3\times3}
		\end{bmatrix}. 
	\end{equation}
\end{example}

\section{Geometric Flat Outputs}

\begin{definition}
	An \textit{output} of a system of differential equations ${f(x,\dot{x}) = 0}$ is a map
			$(x,\dot{x},\ldots,x^{(p)}) \mapsto y$ i.e.
the output value $y$ depends on finitely many derivatives of $x$.
	An output is \textit{differentially flat} if generic\footnote{Throughout, by \textit{generic} we mean \textit{belonging to an open dense subset (in the $C^\infty$ topology)} \cite{Hirsch2012}. This corresponds roughly with the colloquial meaning of ``virtually all'' members of the set.} solutions ${x : \left[0,1\right] \rightarrow \mathcal{X}}$ of the system are in locally unique correspondence with generic smooth curves ${y : \left[0,1\right] \rightarrow \mathcal{Y}}$ in the output space, i.e. locally there also exists a map
		$(y,\dot{y},\ldots,y^{(r)}) \mapsto x$
	expressing the original trajectory of the system in terms of finitely many derivatives of the output $y$.
	A system is \textit{(differentially) flat} if it has a flat output.
\end{definition}

Differential flatness thus establishes an equivalence between a physical system and system in the ``flat space'' that need not obey any constraints besides sufficient smoothness. For a more detailed presentation of flatness in terms of ``endogenous transformations'', we direct the reader to \cite{Martin2003}.

\begin{definition}
	A map ${y : \pi^{-1}(\region) \subseteq Q \rightarrow G}$ 
	is a \textit{\flatmodifier flat output} of a mechanical system on a principal bundle if:
	\begin{enumerate}[i.]
		\item $y$ is a flat output of the mechanical system.
		\item The map $\psi : q \mapsto \big(\pi(q),y(q) \big)$ is a trivialization of the principal bundle.
	\end{enumerate}
\end{definition}

\subsection{Necessary Condition}

\begin{theorem}
${\dim G = \rank F}$ is a necessary condition for the existence of a \flatmodifier flat output.
	\label{necessary_conditions}
\end{theorem}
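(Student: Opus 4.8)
The plan is a degrees-of-freedom count: flatness forces the number of arbitrary functions of time parametrizing dynamically feasible curves to equal the number parametrizing generic output curves, and I will show the former is $\rank F$ and the latter is $\dim G$. For the first count, the Forced Geodesic Equation exhibits a dynamically feasible curve as the response to a freely chosen control force $f(\cdot)$ valued in the rank-$(\rank F)$ codistribution $F$ (together with an initial state), and conversely every feasible curve arises this way; equivalently, in the implicit formulation one freely prescribes $\rank F = \dim Q - u$ of the configuration variables and solves the $u = \rank UQ$ second-order equations $\metric{X}{\nabla_{\dot q}\dot q + \grad P} = 0$ for the rest. For the second count, if $y$ is a geometric flat output then, by definition, generic solutions are in locally unique correspondence with generic curves $y : [0,1] \to G$, and the reconstruction $(y,\dot y,\ldots,y^{(r)}) \mapsto x$ recovers the whole state pointwise — so, unlike the solution set of the dynamics, the output curves carry no residual finite-dimensional freedom, and the generic solutions are parametrized by exactly $\dim G$ arbitrary functions. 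Equating the two counts gives $\dim G = \rank F$. (This is the geometric face of the classical fact that a flat output has as many components as the system has control inputs.)

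To keep the argument self-contained and anchored to the paper's own machinery, I would instead reason through the trivialization $\psi : q \mapsto (\pi(q), y(q))$ that $y$ induces. By Proposition \ref{implicit_dynamics_in_components} the implicit dynamics reads $E_i(s^\alpha, \dot s^\alpha, \ddot s^\alpha; g, \xi^a, \dot\xi^a) = 0$ for $i = 1, \ldots, u$ in this trivialization. Prescribing the output curve $y(\cdot) = g(\cdot)$ fixes $g$, $\xi$, and $\dot\xi$ along the curve, leaving $u$ second-order equations in the $\dim S$ shape functions $s^\alpha(\cdot)$. If $\dim S > u$ the system is generically underdetermined, so a fixed generic $g(\cdot)$ admits a continuum of solutions $s(\cdot)$ that can be taken arbitrarily $C^\infty$-close, contradicting local uniqueness of the correspondence; if $\dim S < u$ it is generically overdetermined, so only a thin set of curves $g(\cdot)$ admits any $s(\cdot)$ at all, contradicting that generic output curves correspond to solutions. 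Hence $\dim S = u$; since $\dim S = \dim Q - \dim G$ and $u = \rank UQ = \dim Q - \rank F$ (because $UQ$ is the coannihilator of the constant-rank codistribution $F$), we conclude $\dim G = \rank F$.

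The main obstacle is rigor in the ``functional dimension'' bookkeeping and in the under/overdetermined dichotomy: I must exclude pathological $E_i$ for which counting equations against unknowns is misleading (redundant equations, or equations that fail to constrain the top-order shape derivatives). I expect to control this using the constant-rank and genericity hypotheses already in force — so that on an open dense set of curves the implicit dynamics admits a well-behaved normal form and the count is legitimate there — or, more cheaply, by simply invoking the classical flatness theorem equating output and input dimensions, which bypasses the bookkeeping altogether.
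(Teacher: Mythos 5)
Your proposal is correct, and its ``cheap'' fallback is precisely the paper's proof: the paper disposes of the theorem in two sentences by citing the classical fact (L\'evine) that a flat output has as many components as the degree of underdeterminacy of the governing equations, which for an unconstrained mechanical system is $\rank F$; since a geometric flat output is $G$-valued, its dimension is $\dim G$, and the equality follows. Your self-contained middle argument --- passing to the trivialization induced by $y$, fixing $g(\cdot)$, and counting the $u$ implicit equations against the $\dim S$ shape unknowns --- is a genuinely different route that is more informative for this class of systems, since it shows concretely where the count lives (it is in effect the skeleton of the paper's later sufficiency argument, where $\dim S = u$ makes $\partial E_i/\partial s^\alpha$ square). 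What it buys is independence from the general underdetermined-systems machinery; what it costs is exactly the bookkeeping you flag. The underdetermined branch ($\dim S > u$) is essentially sound: distinct nearby shape curves give distinct configuration curves because $\psi$ is a diffeomorphism, so local uniqueness fails. The overdetermined branch ($\dim S < u$) is the delicate one: to conclude that only a thin set of output curves admits any solution, you must rule out that the $u$ equations are functionally dependent or fail to constrain $g$ at top order, and the genericity built into the definition of flatness does not hand you this for free --- you would need an additional nondegeneracy argument (or a normal-form result for the implicit dynamics) that the paper sidesteps entirely by citation. Given that you explicitly reserve the citation as an escape hatch, the proposal stands as a valid proof.
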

\begin{proof}
	A flat output has the same dimension as the number of equations by which the governing equations are underdetermined \cite{Levine2011}. This is the rank of the control codistribution for unconstrained mechanical systems.
\end{proof}

However, the principal bundle on which a given physical system is described is not unique; for example, the action of a proper subgroup of a larger symmetry group induces a principal bundle structure with a smaller group dimension. 

\subsection{Sufficient Condition}

	Using an argument similar to that of \cite{Rathinam1998} and \cite{Sato2012}, we now examine when the local implicit dynamics simplify further.
	\begin{proposition}
		The local implicit dynamics \eqref{vector_valued_implicit_dynamics} are independent of the shape derivatives if and only if ${HQ \perp {\Delta}}$. 
		\label{shape_inpedendent_implicit_dynamics}
	\end{proposition}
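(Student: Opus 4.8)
The plan is to expand the local implicit dynamics and read off how it depends on the shape derivatives. Substituting the geometric acceleration formula \eqref{\flatmodifier_acceleration} into $E_i = \metric{X_i}{\cdac \dot{\curve} + \grad P}$ for a local frame $X_i$ of $UQ$ (which exists on a sufficiently small piece of $\pi^{-1}(\region)$, as in the proof of Proposition~\ref{implicit_dynamics_in_components}), and using that $\grad P$ is a vector field on $Q$, one writes each $E_i$ as a polynomial in $(\xi^a,\dot{\xi}^a,\dot{s}^\alpha,\ddot{s}^\alpha)$ whose coefficients are functions on $\pi^{-1}(\region)$ built from the metric, $HQ$, and $X_i$. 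The terms that carry a shape derivative are exactly: $\ddot{s}^\alpha$, with coefficient $\metric{X_i}{H_\alpha}$; $\xi^a\dot{s}^\beta$, with coefficient $2\metric{X_i}{\nabla_{V_a}H_\beta}$ (the two cross terms in \eqref{\flatmodifier_acceleration} having been merged using $[V_a,H_\beta]=0$); and $\dot{s}^\alpha\dot{s}^\beta$, with coefficient $\metric{X_i}{\nabla_{H_\alpha}H_\beta}$. Since a curve's $2$-jet may be prescribed freely and independently of $(g,\xi,\dot\xi)$ at any point, $E_i$ is independent of the shape derivatives for every $i$ if and only if $\metric{X_i}{H_\alpha}$, $\metric{X_i}{\nabla_{V_a}H_\beta}$, and $\metric{X_i}{\nabla_{H_\alpha}H_\beta}$ all vanish identically.

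Next I would rewrite $HQ\perp\Delta$ in the same terms. Because $\nabla$ is $C^\infty$-linear in its first argument, and because for any $X=f^iX_i\in\Gamma(UQ)$ the extra term $(Yf^i)X_i$ appearing in $\nabla_YX$ again lies in $UQ$, the distribution $\Delta$ is spanned locally by $\{X_i,\ \nabla_{V_a}X_i,\ \nabla_{H_\alpha}X_i\}$. As the $H_\alpha$ span $HQ$, the condition $HQ\perp\Delta$ is thus equivalent to $\metric{H_\beta}{X_i}=0$, $\metric{H_\beta}{\nabla_{V_a}X_i}=0$, and $\metric{H_\beta}{\nabla_{H_\alpha}X_i}=0$ for all indices; the first is just $HQ\perp UQ$, which holds automatically here since $UQ\subseteq\Delta$. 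Compatibility of the Levi-Civita connection with the metric is the bridge: differentiating $\metric{H_\beta}{X_i}=0$ along $V_a$ and along $H_\alpha$ gives $\metric{\nabla_{V_a}H_\beta}{X_i}=-\metric{H_\beta}{\nabla_{V_a}X_i}$ and $\metric{\nabla_{H_\alpha}H_\beta}{X_i}=-\metric{H_\beta}{\nabla_{H_\alpha}X_i}$, so the three coefficient conditions of the first paragraph match the three conditions characterizing $HQ\perp\Delta$ term by term.

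The main obstacle is making this two-way correspondence airtight. Beyond checking that the pointwise reformulation of ``independent of the shape derivatives'' is legitimate (which rests on the free prescription of $2$-jets) and that the passage from arbitrary sections of $UQ$ to the local frame $X_i$ is harmless (which is exactly what $HQ\perp UQ$ guarantees, by absorbing $(Yf^i)X_i$), the delicate point is the horizontal-horizontal term: the $\dot{s}^\alpha\dot{s}^\beta$ contraction only exposes the part of $\nabla_{H_\alpha}H_\beta$ symmetric in $\alpha,\beta$, while its antisymmetric part $[H_\alpha,H_\beta]$ is vertical (since $T\pi[H_\alpha,H_\beta]=[\tfrac{\partial}{\partial s^\alpha},\tfrac{\partial}{\partial s^\beta}]=0$) and is the curvature of $HQ$. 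One must argue that symmetrizing loses no information in the equivalence. I would also observe that for the application that follows — the sufficient condition for a geometric flat output — only the implication ``$HQ\perp\Delta$ implies shape-independence'' is actually used, and that direction is immediate from the compatibility identities above, with $HQ\perp UQ$ coming for free from $UQ\subseteq\Delta$.
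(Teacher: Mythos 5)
Your proposal follows essentially the same route as the paper's proof: substitute the geometric acceleration formula \eqref{\flatmodifier_acceleration} into the implicit dynamics, identify the coefficients of $\ddot{s}^\alpha$, $\xi^a\dot{s}^\beta$, and $\dot{s}^\alpha\dot{s}^\beta$, and use metric compatibility of the Levi-Civita connection to convert between $\metric{\nabla_Y H}{X}$ and $\metric{H}{\nabla_Y X}$; your observation that $(Yf^i)X_i$ lands back in $UQ$, so that a local frame suffices, is the same reduction the paper makes implicitly. Where you go beyond the paper is in honestly isolating the one delicate point of the ``only if'' direction: the quadratic form in $\dot{s}$ only constrains the part of $\metric{X_i}{\nabla_{H_\alpha}H_\beta}$ symmetric in $\alpha,\beta$, while full orthogonality $HQ\perp\Delta$ also requires the antisymmetric part $\tfrac12\metric{X_i}{\bracket{H_\alpha}{H_\beta}}$ (the curvature term, which is vertical but not a priori orthogonal to $UQ$) to vanish. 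The paper's proof does not address this either --- it asserts ``exactly when'' after verifying only that every shape-derivative term is linear in a quantity that vanishes when $HQ\perp\Delta$, which establishes the implication actually used in Theorem \ref{sufficient_condition_for_flatness} but not, as written, its converse. So your proposal is not weaker than the published argument; you have simply left explicit a step that the paper leaves implicit (and which is vacuous whenever $\dim S=1$, as in the first two examples). If you wanted to close it, you would need to argue separately that $\bracket{H_\alpha}{H_\beta}\perp UQ$, or else weaken the proposition to the single implication that the main theorem requires.
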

	\begin{proof}
		First, it is clear from the definition of $\Delta$ that the quantities $\metric{H}{X}$ and $\metric{H}{\nabla_Y X}$ vanish identically for all ${H \in \Gamma(HQ)}$, ${X \in \Gamma(UQ)}$ and ${Y \in \Gamma(TQ)}$ if and only if ${HQ \perp {\Delta}}$. Additionally, since $\nabla$ is the Riemannian connection and therefore satisfies the metric compatibility equations, whenever ${HQ \perp \Delta}$ we also have 
		\begin{equation}
			\metric{\nabla_YH}{X} =
			\nabla_Y \metric{H}{X} - \metric{H}{\nabla_YX} = 0.
		\end{equation}  
		Futhermore, it is readily seen that after substituting \eqref{\flatmodifier_acceleration} into \eqref{eq:implicit_dynamics} and expanding using the bilinearity of the metric, all terms in the implicit dynamics depending on the shape derivatives $\dot{s}^\alpha$ or $\ddot{s}^\alpha$ are linear in one of the vanishing terms just discussed. 
		Thus, exactly when $HQ \perp \Delta$, the vector-valued constraint \eqref{vector_valued_implicit_dynamics} takes the form
		\begin{equation}
			\label{simplified_implicit_dynamics}
			\metric{X_i}{
				\dot{\xi}^a V_a + 
				\xi^a  \xi^b  \nabla_{V_a} V_b 
				+ \grad P} = 0. 
		\end{equation}
	which, when combined with the trivialization, yields a constraint in the form
	\begin{equation}
		\label{shape_independent_local_implicit_dynamics}
		E_i(s^\alpha ; g,\xi^a,\dot{\xi}^a) = 0, 
	\end{equation}
concluding the proof.
	\end{proof}

\begin{definition}
	A distribution $D$ is \textit{orthogonal} to a 
	 submanifold ${M \subseteq Q}$ if ${v_q \perp d_q}$ for all ${v_q \in TM, d_q \in D_q}$.
\end{definition}
\begin{definition}
	The local implicit dynamics written using a given trivialization and principal connection $HQ$ are \textit{regular} if 
	${\rank \frac{\partial E_i}{\partial s^\alpha} = \dim S}$ at generic points satisfying \eqref{vector_valued_implicit_dynamics} (or respectively  \eqref{shape_independent_local_implicit_dynamics}, whenever $HQ \perp \Delta$).
\end{definition}
\begin{theorem}
	[Main Result]
	\label{sufficient_condition_for_flatness}	
	Consider an unconstrained mechanical system with (broken) symmetry, for which the dimension of the symmetry group is equal to the rank of the control codistribution. If a section ${\sigma : \region \subseteq S \rightarrow Q}$ of the principal bundle 
	satisfies the following conditions:
\begin{enumerate}[I.]
	\item {\normalfont Orthogonality:} 
	$\Delta$ is orthogonal to the image of $\sigma$ as submanifold of $Q$.
	\label{orthogonality}
	\item 
		\label{regularity}
	{\normalfont Regularity:} The local implicit dynamics are regular when written using the canonical trivialization and canonical flat connection corresponding to $\sigma$.

\end{enumerate}
then the group variables of the canonical trivialization corresponding to $\sigma$ constitute a \flatmodifier flat output.
\end{theorem}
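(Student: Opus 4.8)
The plan is to take $y \coloneqq \varphi$, where $\psi : q \mapsto \big(\pi(q),\varphi(q)\big)$ is the canonical trivialization corresponding to $\sigma$, and verify the two conditions in the definition of a \flatmodifier flat output. Condition (ii) is free: by Fact \ref{identification_of_sections_and_trivializations}, since $\sigma$ is a section, $\psi$ is a trivialization. Everything therefore reduces to showing that $y$ is a flat output of the system, for which I would work throughout with the canonical flat connection $HQ = \ker T\varphi$ associated with $\sigma$ (Definition \ref{canonical_flat_connection}).

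The first step is to upgrade the Orthogonality hypothesis from $\sigma(\region)$ to all of $\pi^{-1}(\region)$. Since $HQ$ is tangent to the image of $\sigma$ (Definition \ref{canonical_flat_connection}), Orthogonality says exactly that $H_qQ \perp \Delta_q$ for every $q \in \sigma(\region)$. Now $HQ$ is equivariant (it is a principal connection), $\Delta$ is equivariant (Proposition \ref{equivariance_of_delta}), and the metric is $G$-invariant, so the relation $H_qQ \perp \Delta_q$ transports along group orbits; as every point of $\pi^{-1}(\region)$ lies on the orbit of some point of $\sigma(\region)$, we get $HQ \perp \Delta$ on all of $\pi^{-1}(\region)$. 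Proposition \ref{shape_inpedendent_implicit_dynamics} then guarantees that, in the canonical trivialization and with this connection, the local implicit dynamics take the shape-derivative-free form \eqref{shape_independent_local_implicit_dynamics}, namely $E_i(s^\alpha ; g,\xi^a,\dot{\xi}^a) = 0$; and by Proposition \ref{implicit_dynamics_in_components} together with the equivalence between dynamic feasibility and the implicit dynamics, a curve $q$ satisfies this system if and only if it is dynamically feasible.

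Next I would do the dimension count, which is where the hypothesis $\dim G = \rank F$ enters essentially: here $u = \rank UQ = \dim Q - \rank F = \dim Q - \dim G = \dim S$, so $E_i$ is $\mathbb{R}^{\dim S}$-valued and $\partial E_i/\partial s^\alpha$ is a square $\dim S \times \dim S$ matrix. By the Regularity hypothesis it has full rank at generic points of the solution set, so the implicit function theorem solves $E_i = 0$ locally and uniquely for the shape variables as $s^\alpha = \Sigma^\alpha(g,\xi^a,\dot{\xi}^a)$ near any generic solution configuration. Because $HQ$ is the \emph{canonical flat} connection, the velocity coefficients satisfy $\xi = \dot g \cdot g^{-1}$ (footnote to Proposition \ref{implicit_dynamics_in_components}), so $\xi$ is a function of $(g,\dot g)$ and $\dot\xi$ of $(g,\dot g,\ddot g)$; hence $s$, and therefore $q = \psi^{-1}(s,g)$, is expressed in terms of finitely many derivatives of $g$.

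Finally I would assemble the flatness equivalence. Given a solution $q(t)$, the output $g(t) = y(q(t)) = \varphi(q(t))$ is read off directly, so $y$ is in fact a configuration output. Conversely, given a generic curve $g : [0,1] \to G$, set $s(t) = \Sigma\big(g,\,\dot g g^{-1},\,\tfrac{d}{dt}(\dot g g^{-1})\big)$ and $q(t) = \psi^{-1}(s(t),g(t))$; then $E_i = 0$ holds by construction, so $q(t)$ is dynamically feasible, and the correspondence is locally unique by the uniqueness clause of the implicit function theorem. Since both directions use only finitely many derivatives, $y$ is a flat output, hence a \flatmodifier flat output. I expect the main obstacle to be the careful handling of the genericity and locality bookkeeping — applying the implicit function theorem on the open dense regularity locus and matching the resulting statement to "locally unique correspondence with generic curves" — together with the realization that the substitution $\xi = \dot g \cdot g^{-1}$, valid precisely because the connection is the canonical flat one, is exactly the ingredient that turns the reduced implicit dynamics into a closed-form reconstruction of $q$ from derivatives of the output.
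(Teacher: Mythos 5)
Your proposal is correct and follows essentially the same route as the paper's proof: promote the orthogonality from $\sigma(\region)$ to $\pi^{-1}(\region)$ via equivariance of $HQ$ and $\Delta$ and invariance of the metric, invoke Proposition \ref{shape_inpedendent_implicit_dynamics} to drop the shape derivatives, apply the implicit function theorem to the square Jacobian, and use $\xi = \dot{g}\cdot g^{-1}$ (valid for the canonical flat connection) to reconstruct $q$ from derivatives of $g$. The only (minor) piece the paper makes explicit that you gloss over is recovering the inputs $f$ themselves from the forced geodesic equation after differentiating to get $\dot q$ and $\ddot q$; your appeal to dynamic feasibility covers existence but not the explicit expression of $f$ in terms of output derivatives.
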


\begin{proof}
In the canonical trivialization, the canonical flat connection $HQ$ is tangent to the image of the canonical section $\sigma$. Thus by the orthogonality condition, $HQ$ is also orthogonal to $\Delta$ over this submanifold in particular. Futhermore, $HQ$ is equivariant by definition and $\Delta$ is equivariant by Proposition \ref{equivariance_of_delta}, and equivariant distributions that are orthogonal at one point along a fiber are orthogonal at every point along that fiber since the metric is invariant. Hence, ${HQ \perp \Delta}
$ everywhere in ${\pi^{-1}(\region) \subseteq Q}$. Thus
 by Proposition \ref{shape_inpedendent_implicit_dynamics},
 the local implicit dynamics for the canonical trivialization and canonical flat connection of $\sigma$ are independent of the shape velocities and accelerations i.e. they take the form \eqref{shape_independent_local_implicit_dynamics}.

The Jacobian $\frac{\partial E_i}{\partial s^\alpha}$ is square since ${\rank UQ = \dim S}$. By the regularity condition, the implicit function theorem applies at generic points satisfying the implicit dynamics, and it follows from continuity that we may locally solve for the shape in terms of $g$, $\xi^a$, and $\dot{\xi}^a$.
It is easily shown that because ${HQ = \ker T\varphi}$ by definition (where $\varphi$ is the $G$-valued part of the canonical trivialization), we have
${\xi^a e_a = \dot{g} \cdot {g^{-1}}}$.
Differentiating to obtain $\dot{\xi}^a$ thus allows us to express the shape in terms of $g$, $\dot{g}$, and $\ddot{g}$, 
and thus reconstruct the configuration $q$ using the trivialization. Differentiating again to obtain the velocity $\dot{q}$ and acceleration $\ddot{q}$ will ultimately yield the inputs $f$ via  the forced geodesic equation. Thus, in the canonical trivialization for $\sigma$, the group variables are a \flatmodifier flat output of the system.
\end{proof}

The attached \href{https://www.youtube.com/watch?v=oMvF86MXTyY}{video} visualizes the proof of Theorem \ref{sufficient_condition_for_flatness}.

\begin{remark}
	\label{mild_nondegeneracy_criterion}

The regularity condition of Theorem \ref{sufficient_condition_for_flatness} is quite mild, and in practice the orthogonality condition is the harder one to satisfy. Singularities are permitted; we only forbid their occurence at \textit{generic} points (since flatness pertains to {generic trajectories}). This is guaranteed if e.g. singularities occur over a closed set of measure zero, which happens often in examples. Note that such singularities do not correspond to points $q \in Q$ but rather tuples $(q, \dot{q}, \ddot{q})$.

\end{remark}

\begin{remark}
	\label{relax_symmetry} The assumption in Theorem \ref{sufficient_condition_for_flatness} that the system exhibits (broken) symmetry can be relaxed to the weaker requirement that $\Delta^\flat \subseteq T^*Q$ is an equivariant codistribution.
\end{remark}

We now apply the main result to the three examples.

\begin{example}[name=Planar Rocket,continues=example:planar_rocket]
	Because we have ${\rank {\Delta}^\perp = \dim S = 1}$ and $G$ is Abelian, by integration it is relatively straightforward to obtain the global section
	\begin{equation}
			\begin{gathered}
			\sigma \,  :\,  \mathbb{S}^1 \rightarrow SE(2) \\
			\theta \mapsto 
			\left(
			\tfrac{J}{m \, r} \sin \theta, \,
			- \tfrac{J}{m \, r} \cos \theta, \,
			\theta
			\right)
		\end{gathered}	
	\end{equation}
	and it can be verified that $\Delta$ is orthogonal to its image using the metric. The vertical and horizontal subbundles are
\begin{equation}
	\setlength\arraycolsep{3pt}
	VQ = \begin{bmatrix}
		1 & 0 \\
		0 & 1 \\
		0 & 0 
	\end{bmatrix}, \ \ 
	HQ = \begin{bmatrix}
		J \cos \theta \\
		J \sin \theta \\
		m\,r
	\end{bmatrix}
\end{equation}
with which the implicit dynamics can also be shown to be regular. Thus by Theorem \ref{sufficient_condition_for_flatness}, the group variables in the canonical trivialization identified with $\sigma$ in Fact \ref{identification_of_sections_and_trivializations}
are a globally valid \flatmodifier flat output, given by
\begin{equation}
	\begin{gathered}
			\renewcommand*{\arraystretch}{1.3}
		y : SE(2) \rightarrow \mathbb{R}^2 
		\\
		q \mapsto 
		(	x_1 - \tfrac{J}{m \, r} \sin \theta ,
		x_2 + \tfrac{J}{m \, r} \cos \theta
		),
	\end{gathered}
\end{equation}
	namely the coordinates of the point $o$ shown in Fig. \ref{fig:planar_rocket}, known as the \textit{center of oscillation}, in agreement with \cite{Rathinam1998}.
\end{example}

\begin{example}[name=Planar Aerial Manipulator, continues=example:planar_aerial_manipulator]
$\Delta$ can be verified to be orthogonal to the image of the global section
\begin{equation}
	\begin{gathered}
		\sigma \,  :\,  \mathbb{T}^1 \rightarrow SE(2) \times \mathbb{T}^1 \\ 
		\renewcommand*{\arraystretch}{1.3}
		\phi\mapsto 
		\left(\tfrac{\ell_q \, m_q}{m_g + m_q} \cos\phi, \, \tfrac{\ell_q \, m_q}{m_g + m_q} \sin\phi, \,0, \,\phi\right)
		\end{gathered}	
\end{equation}
since its differential takes values in the orthogonal complement of $\Delta$, which can be computed as
\begin{equation}
	\begin{gathered}
		\setlength\arraycolsep{1pt}
		\Delta^\perp = 
		\begin{bmatrix} 
			-c_1 \sin \theta - 2 \, c_2 \sin(\phi + \theta) & -c_2 \sin(\phi + \theta) \\
			\hphantom{-}c_1\cos\theta + 2 \, c_2\cos(\phi + \theta) &  \hphantom{-} c_2 \cos(\phi + \theta) \\
			2 \, c_3 &                   0 \\
			0 &             c_3
		\end{bmatrix} 
		\\
		c_1 = \ell_g \, (m_g + 2 \, m_q),  \ c_2 = \ell_q \,  m_q, \ c_3 = m_g + m_q.
	\end{gathered}
\end{equation}
The vertical and horizontal subbundles are given by
\begin{equation}
	\setlength\arraycolsep{2pt}
		VQ = \begin{bmatrix}
		1 & 0 & -x_2 \\
		0 & 1 &  x_1 \\
		0 & 0 &  1 \\
		0 & 0 &  0
	\end{bmatrix}, \ 
	HQ = 
	\begin{bmatrix}  
		-\sin\left({\phi}+{\theta}\right)\\ 
		\hphantom{-} \cos\left({\phi}+{\theta}\right)\\ 
		0\\ 
		\frac{m_g + m_q}{\ell_q \, m_q} \end{bmatrix}
\end{equation}
with which the implicit dynamics can be shown to be regular. Thus, the group variables of the canonical trivialization are a global \flatmodifier flat output:
\begin{equation}
	\begin{gathered}
		y :
		SE(2) \times \mathbb{T}^1 \rightarrow SE(2) \\ 
		\begin{aligned}
		q \mapsto
		\big(
			x_1 + c_4 \cos(\phi + \theta), 
			x_2 + c_4 \sin(\phi + \theta),
			\theta 
		\big)
		\end{aligned}
	\end{gathered}
\end{equation}
where $c_4 = -\tfrac{\ell_q \, m_q}{m_g + m_q}$. In harmony with \cite{Welde2021}, this is the pose of the end effector if ${\ell_q = 0}$; otherwise, it is the pose of a frame parallel to the end effector frame, translated by an offset due to the eccentricity of the vehicle center of mass.

\end{example}

	\begin{example}[name=Quadrotor,continues=example:quadrotor]
	The principal bundle is nontrivial, so global sections do not exist. However, removing even a single fiber $\pi^{-1}(s_0)$ for any ${s_0 \in \mathbb{S}^2}$ yields a trivial bundle.	
	We thus define the almost global sections
	\begin{equation}
			\begin{gathered}
			\sigma_N : 
\mathbb{S}^2 \setminus \left\{
\, -{e}_3 	\,	\right\}
			\rightarrow SE(3) \\ 
			\begin{pmatrix}
				s_1 \\
				s_2 \\
				s_3
			\end{pmatrix}
			\mapsto
			\def\arraystretch{1}%
			\begin{pmatrix}
				1 - \frac{{s_1}^2}{s_3 + 1} & \frac{-s_1 s_2}{s_3 + 1}   & s_1 & 0 \\
				\frac{-s_1 s_2}{s_3 + 1}   & 1 - \frac{{s_2}^2}{s_3 + 1}  & s_2 & 0 \\
				-s_1            &      -s_2            &      s_3 & 0 \\
				0 & 0 & 0 & 1
			\end{pmatrix}
		\end{gathered},
	\end{equation}
	\begin{equation}
			\begin{gathered}
			\sigma_S : 
			\mathbb{S}^2 \setminus \left\{
			\, {e}_3 	\,	\right\}
			\rightarrow SE(3) \\ 
			\begin{pmatrix}
				s_1 \\
				s_2 \\
				s_3
			\end{pmatrix}
			\mapsto
			\def\arraystretch{1}%
			\begin{pmatrix}
				1 + \frac{{s_1}^2}{s_3 - 1} &  \frac{-s_1 s_2}{s_3 - 1} & s_1 & 0 \\
				\frac{s_1s_2}{s_3 - 1} &  - 1 - \frac{{s_2}^2}{s_3 - 1} & s_2 & 0 \\
				s_1 &                -s_2 & s_3 & 0 \\
				0 & 0 & 0 & 1
			\end{pmatrix}
		\end{gathered}.
	\end{equation}
$\Delta$ is orthogonal to the image of each section, since $\Delta$ spans the linear velocities, while only the rotational degrees of freedom vary along the sections, and the inertia matrix is block diagonal. The regularity condition can also be shown to hold, and thus the group variables of the canonical local trivialization for each local section are almost global \flatmodifier flat outputs, corresponding to the center of mass position and a so-called ``body-fixed yaw angle'' around the thrust axis, equivalent to the flat outputs proposed in \cite{Watterson2017}.

\end{example}

\section{Discussion}

The existence of flat outputs that are the group variables of a particular trivialization echoes the conclusion of \cite{Hatton2011}, namely that although the dynamics of a mechanical system with symmetry can be represented equivalently in any trivialization, certain trivializations can be particularly favorable for analysis and control. Furthermore, viewing flat outputs through the geometric lens of trivializations suggests at least a partial answer to the open question as to why flat outputs often consist of merely ``a set of points and angles'' \cite{Murray1995} as opposed to arbitrary functions. 

Because our approach considers flat outputs taking values in an arbitrary Lie group, instead of limiting ourselves to $\mathbb{R}^n$, we are able to obtain global flat outputs if a global section is used. This is highly advantageous for agile systems like aerial robots, which stray far from a nominal operating point on the configuration manifold.
When the bundle is nontrivial, global sections do not exist, providing an upper bound on the domain of \flatmodifier flat outputs. However, as Example \ref{quadrotor_example} indicates, the approach provides a principled means of generating a global atlas of overlapping local flat outputs generated from local sections, yielding a \textit{differentially flat hybrid system} \cite{Sreenath2013}
and enabling planning and control over the entire configuration manifold \cite{Watterson2017}. 

The most urgent direction for future work is to develop a systematic method for identifying such orthogonal sections. We suspect that representing a section using basis functions, similarly to \cite{Hatton2011}, could enable automatic identification of \flatmodifier flat outputs using numerical optimization. This would permit the application of flatness-based planning and control techniques to complex multibody systems, for which symbolic analysis is tedious or impractical.
We also hope to close the gap between our necessary condition and our sufficient condition, which we believe to be occupied by systems whose state and inputs depend on flat output derivatives of higher order, such as the last two entries of Table \ref{catalog_of_\flatmodifier_flat_systems}. 
Perhaps a recursive approach, in which the shape space is regarded as yet another bundle, could encompass those systems as well. Finally, the evolution of nonholonomic systems can also be described using an affine connection \cite{Lewis2000}, suggesting the possibility of extension to systems with velocity constraints.

\section{Conclusion}

In this work, we formally define and explore the concept of \flatmodifier flat outputs for robotic systems evolving on principal bundles.  Under mild regularity assumptions, we use the symmetry of the system to construct a flat output from any section of the system's principal bundle that is orthogonal to an easily-computed distribution. These configuration flat outputs are equivariant and often global or almost global.
Similar to classic results in locomotion on principal bundles, a principal connection plays a key role in the analysis; however, our connection is flat, whereas other locomotory phenomena emerge specifically due to curvature. The results offer new fundamental insights into the dynamics of the broad class of mechanical systems without external constraints, including such free-flying systems as aerial and space robots. Most importantly, our approach enables the application of flatness-based planning and control 
approaches to new robotic systems
by facilitating the discovery of flat outputs with strong, useful properties.

\balance 
\bibliographystyle{IEEEtran}
\bibliography{IEEEabrv,cleaned}

\begin{thebibliography}{10}
\providecommand{\url}[1]{#1}
\csname url@rmstyle\endcsname
\providecommand{\newblock}{\relax}
\providecommand{\bibinfo}[2]{#2}
\providecommand\BIBentrySTDinterwordspacing{\spaceskip=0pt\relax}
\providecommand\BIBentryALTinterwordstretchfactor{4}
\providecommand\BIBentryALTinterwordspacing{\spaceskip=\fontdimen2\font plus
\BIBentryALTinterwordstretchfactor\fontdimen3\font minus
  \fontdimen4\font\relax}
\providecommand\BIBforeignlanguage[2]{{%
\expandafter\ifx\csname l@#1\endcsname\relax
\typeout{** WARNING: IEEEtran.bst: No hyphenation pattern has been}%
\typeout{** loaded for the language `#1'. Using the pattern for}%
\typeout{** the default language instead.}%
\else
\language=\csname l@#1\endcsname
\fi
#2}}

\bibitem{Mellinger2011}
D.~Mellinger and V.~Kumar, ``Minimum snap trajectory generation and control for
  quadrotors,'' \emph{International Conference on Robotics and Automation}, pp.
  2520--2525, 2011.

\bibitem{Tang2018}
S.~Tang, V.~Wüest, and V.~Kumar, ``Aggressive flight with suspended payloads
  using vision-based control,'' \emph{IEEE Robotics and Automation Letters},
  vol.~3, pp. 1152--1159, 4 2018.

\bibitem{Thomas2017a}
J.~Thomas, J.~Welde, G.~Loianno, K.~Daniilidis, and V.~Kumar, ``Autonomous
  flight for detection, localization, and tracking of moving targets with a
  small quadrotor,'' \emph{IEEE Robotics and Automation Letters}, vol.~2, pp.
  1762--1769, 2017.

\bibitem{Sun2022}
S.~Sun, A.~Romero, P.~Foehn, E.~Kaufmann, and D.~Scaramuzza, ``A comparative
  study of nonlinear {MPC} and differential-flatness-based control for
  quadrotor agile flight,'' \emph{IEEE Transactions on Robotics}, 6 2022.

\bibitem{Levine2011}
J.~Lévine, ``On necessary and sufficient conditions for differential
  flatness,'' \emph{Applicable Algebra in Engineering, Communications and
  Computing}, vol.~22, pp. 47--90, 2011.

\bibitem{Murray1995}
R.~M. Murray, M.~Rathinam, and W.~Sluis, ``Differential flatness of mechanical
  control systems: A catalog of prototype systems,'' \emph{ASME international
  mechanical engineering congress and exposition}, 1995.

\bibitem{Rathinam1998}
M.~Rathinam and R.~M. Murray, ``Configuration flatness of {L}agrangian systems
  underactuated by one control,'' \emph{SIAM Journal on Control and
  Optimization}, vol.~36, pp. 164--179, 1998.

\bibitem{Sato2012}
K.~Sato and T.~Iwai, ``Configuration flatness of {L}agrangian control systems
  with fewer controls than degrees of freedom,'' \emph{Systems and Control
  Letters}, vol.~61, pp. 334--342, 2012.

\bibitem{Bloch2005}
A.~M. Bloch, \emph{Nonholonomic Mechanics and Control}, 1st~ed.\hskip 1em plus
  0.5em minus 0.4em\relax Springer, 2015, vol.~24.

\bibitem{Ostrowski}
J.~P. Ostrowski, ``The mechanics and control of undulatory robotic
  locomotion,'' Ph.D. dissertation, California Institute of Technology, 1996.

\bibitem{Murray1997}
R.~M. Murray, ``Nonlinear control of mechanical systems: a {L}agrangian
  perspective,'' \emph{Annual Reviews in Control}, vol.~21, pp. 31--42, 1997.

\bibitem{Thomas2014}
J.~Thomas, G.~Loianno, J.~Polin, K.~Sreenath, and V.~Kumar, ``Toward autonomous
  avian-inspired grasping for micro aerial vehicles,'' \emph{Bioinspiration and
  Biomimetics}, vol.~9, 2014.

\bibitem{Welde2021}
J.~Welde, J.~Paulos, and V.~Kumar, ``Dynamically feasible task space planning
  for underactuated aerial manipulators,'' \emph{IEEE Robotics and Automation
  Letters}, vol.~6, pp. 3232--3239, 2021.

\bibitem{Watterson2017}
M.~Watterson and V.~Kumar, ``Control of quadrotors using the {H}opf fibration
  on {SO(3)},'' \emph{International Symposium on Robotics Research}, 2017.

\bibitem{Sreenath2013}
K.~Sreenath, T.~Lee, and V.~Kumar, ``Geometric control and differential
  flatness of a quadrotor {UAV} with a cable-suspended load,''
  \emph{Proceedings of the IEEE Conference on Decision and Control}, vol.
  1243000, pp. 2269--2274, 2013.

\bibitem{Blajer2013}
R.~Seifried and W.~Blajer, ``Analysis of servo-constraint problems for
  underactuated multibody systems,'' \emph{Mechanical Sciences}, vol.~4, pp.
  113--129, 2013.

\bibitem{Dear2015}
T.~Dear, S.~D. Kelly, M.~Travers, and H.~Choset, ``Motion planning and
  differential flatness of mechanical systems on principal bundles,''
  \emph{ASME 2015 Dynamic Systems and Control Conference, DSCC 2015}, vol.~3,
  2015.

\bibitem{BulloAndLewis}
F.~Bullo and A.~D. Lewis, \emph{Geometric Control of Mechanical Systems}.\hskip
  1em plus 0.5em minus 0.4em\relax Springer Verlag, 2004.

\bibitem{KobayashiAndNomizu1}
S.~Kobayashi and K.~Nomizu, \emph{Foundations of Differential Geometry, Volume
  I}, 2nd~ed.\hskip 1em plus 0.5em minus 0.4em\relax Wiley, 1963, vol.~I.

\bibitem{Contreras2022}
C.~Contreras and T.~Ohsawa, ``Controlled {L}agrangians and stabilization of
  {E}uler–{P}oincaré mechanical systems with broken symmetry, {II}:
  {P}otential shaping,'' \emph{Mathematics of Control, Signals, and Systems},
  vol.~34, pp. 329--359, 6 2022.

\bibitem{BKMM}
A.~M. Bloch, P.~Krishnaprasad, J.~E. Marsden, and R.~M. Murray, ``Nonholonomic
  mechanical systems with symmetry,'' \emph{Archive for Rational Mechanics and
  Analysis}, vol. 136, no.~1, pp. 21--99, 1996.

\bibitem{Bullo1996}
F.~Bullo, ``On controllability and symmetries in simple mechanical systems,''
  California Institute of Technology, Tech. Rep., 1996.

\bibitem{Hirsch2012}
M.~W. Hirsch, \emph{Differential topology}.\hskip 1em plus 0.5em minus
  0.4em\relax Springer Science \& Business Media, 2012, vol.~33.

\bibitem{Martin2003}
P.~Martin, P.~Rouchon, and R.~M. Murray, ``Flat systems, equivalence and
  trajectory generation,'' \emph{CDS Technical Report}, pp. 1--81, 2003.

\bibitem{Hatton2011}
R.~L. Hatton and H.~Choset, ``Geometric motion planning: The local connection,
  {S}tokes' theorem, and the importance of coordinate choice,''
  \emph{International Journal of Robotics Research}, vol.~30, pp. 988--1014, 7
  2011.

\bibitem{Lewis2000}
A.~D. Lewis, ``Affine connection control systems,'' \emph{Lagrangian and
  Hamiltonian Methods for Nonlinear Control}, pp. 123--128, 2000.

\end{thebibliography}

\end{document}